\documentclass{article}

\usepackage[preprint]{ewrl_2026}

\usepackage[utf8]{inputenc}
\usepackage[T1]{fontenc}
\usepackage[colorlinks=true, citecolor=blue]{hyperref}
\usepackage{url}
\usepackage{booktabs}
\usepackage{amsfonts}
\usepackage{amsmath}
\usepackage{amssymb}
\usepackage{amsthm}
\usepackage{nicefrac}
\usepackage{microtype}
\usepackage{xcolor}
\usepackage{algorithm}
\usepackage{algorithmic}
\usepackage{mathtools}

\newtheorem{theorem}{Theorem}
\newtheorem{lemma}{Lemma}
\newtheorem{proposition}{Proposition}
\newtheorem{corollary}{Corollary}
\newtheorem{definition}{Definition}
\newtheorem{remark}{Remark}
\newtheorem{assumption}{Assumption}
\newtheorem{example}{Example}

\DeclareMathOperator*{\argmin}{arg\,min}
\DeclareMathOperator{\supp}{supp}
\newcommand{\alg}{\texttt{Fair-ETC-TPZSG}}
\newcommand{\E}{\mathbb{E}}
\newcommand{\Prob}{\mathbb{P}}
\newcommand{\R}{\mathbb{R}}
\newcommand{\Sx}{\mathcal{S}_x}
\newcommand{\Sy}{\mathcal{S}_y}
\newcommand{\Sact}{\mathcal{S}_{\mathrm{act}}}
\newcommand{\Sflr}{\mathcal{S}_{\mathrm{flr}}}
\newcommand{\F}{\mathcal{F}}
\newcommand{\ind}{\mathbf{1}}
\newcommand{\eps}{\varepsilon}
\newcommand{\fair}{\mathrm{fair}}
\newcommand{\tA}{\widetilde{A}}
\newcommand{\tp}{\widetilde{p}}
\newcommand{\tq}{\widetilde{q}}
\newcommand{\tDelta}{\widetilde{\Delta}}

\title{Fairness in two-player zero-sum games with\\ bandit feedback}

\author{S Akash \\ LatentForce.ai \\ Bengaluru, India
        \And
        Pratik Gajane \\ Laboratoire d'Informatique Fondamentale d'Orléans \\ University of Orléans, France
}

\begin{document}
\maketitle

\begin{abstract}
We study two-player zero-sum games (TPZSGs) with bandit feedback under fairness constraints requiring every action to be played with probability at least $\alpha/m$.
Existing instance-dependent results target $\textit{pure}$ Nash equilibria, while fairness generically produces $\textit{mixed}$ equilibria, a harder learning target.
Our key technical tool is a reparametrization: every fair strategy decomposes as $p = (\alpha/m)\mathbf{1} + (1-\alpha)\widetilde{p}$ with $\widetilde{p} \in \Delta_m$, and substituting into the payoff form yields $p^{\top}Aq = \widetilde{p}^{\top}\widetilde{A} q$ for a fair payoff matrix $\widetilde{A} := (1-\alpha)A + \alpha\mathbf{1} c^{\top}$, where $c_j = \tfrac{1}{m}\sum_i A(i,j)$ is the column-mean vector.
The fair game on $A$ is then equivalent to a standard zero-sum game on $\widetilde{A}$, so equilibrium existence, KKT structure, and LP basis stability reduce to classical results applied to $\widetilde{A}$.
We derive the fair minimax value, fair Nash equilibrium, fair regret, and a clean dual representation showing the price of fairness is at most $\alpha(1-1/m)$ and vanishes whenever the unconstrained equilibrium already has full support.
Our main result is an $\widetilde{O}(T^{2/3})$ regret bound for an Explore-Then-Commit algorithm, $\texttt{Fair-ETC-TPZSG}$, applicable to general mixed fair equilibria, together with a discussion of why naive action elimination does not readily improve it.
When the fair equilibrium has a single dominant action, equivalently when $\widetilde{p}^{\star}$ is a vertex of $\Delta_m$, the bound sharpens to instance-dependent $\widetilde{O}(1/\widetilde{\Delta}(\alpha)^{2})$, where $\widetilde{\Delta}(\alpha)$ is the LP-margin gap.
\end{abstract}

\section{Introduction}
\label{sec:intro}

Two-player zero-sum games (TPZSGs) are a foundational object of study in game theory and online learning \citep{vonneumann1928,nash1951,lattimore2020}.
When the payoff matrix is unknown and estimated through bandit feedback, the learner must balance exploration against exploitation while contending with an adversarial opponent.
Recent progress has been substantial: \citet{ito2025} established instance-dependent regret bounds for self-play of the Tsallis-INF algorithm of \citet{zimmert2021}, achieving an $O(c_{1}\log T + \sqrt{c_{2}T})$ bound that accelerates the worst-case $O(\sqrt T)$ rate, while \citet{yilmaz2025} analyzed Explore-Then-Commit (ETC) and obtained $O(\log T/\Delta)$ rates whenever the game admits a pure Nash equilibrium (NE) with gap $\Delta$.

A common limitation of this line of work, acknowledged by \citet{yilmaz2025}, is the restriction to pure NE.
We show that this restriction becomes \emph{critical} once fairness constraints enter the picture.
Consider a setting where the row player's actions correspond to serving distinct groups, and the column player is an adversary shaping the environment.
Such settings are common: a system allocating resources under adversarial demand, a recommendation system facing strategic manipulation, or a content moderator acting across demographic categories.
In all of them the optimal unconstrained strategy is free to concentrate its mass on a few high-value actions.
This is acceptable when actions are interchangeable, but when they correspond to groups it means some groups are served rarely or not at all.
A natural remedy is the \emph{minimum-play constraint}: every action receives probability at least $\alpha/m$, where $\alpha \in [0,1]$ controls fairness stringency.
This constraint generically transforms the equilibrium from pure to mixed, \emph{even when the unconstrained game admits a pure NE}.

The shift from pure to mixed equilibria is not merely cosmetic.
For pure NE, correct identification of the optimal action yields zero commit-phase regret in ETC-type algorithms, enabling $O(\log T)$ rates.
For mixed NE, the committed strategy depends continuously on the estimated payoff matrix and is never exactly equal to its target, producing $O(\eps)$ per-round error that accumulates to $\Theta(T\eps)$ over the horizon.
Balancing this against $\Theta(\tau m\ell)$ exploration cost yields an $O(T^{2/3})$ rate, a barrier we believe is intrinsic to ETC schemes; an adaptive algorithm that re-solves the LP on improving estimates would be needed to do better.

\paragraph{Technical approach.}
The analysis centers on a reparametrization (Section~\ref{sec:reparam}) that expresses any fair strategy $p \in \F_{\alpha}$ as $p = (\alpha/m)\ind + (1-\alpha)\tp$ for $\tp \in \Delta_m$, yielding $p^{\top}\!Aq = \tp^{\top}\tA q$ for the fair payoff matrix $\tA := (1-\alpha)A + \alpha\,\ind\, c^{\top}$ with $c_j = (1/m)\sum_i A(i,j)$.
The fair game on $A$ is then equivalent to a standard zero-sum game on $\tA$, and every property we need (equilibrium existence, the dual representation, KKT structure, LP basis stability) follows from classical results applied to $\tA$.

\paragraph{Contributions.}
\begin{enumerate}
\item We introduce the \emph{fair TPZSG} model and the reparametrization reduction, deriving the fair minimax value, fair Nash equilibrium, fair regret, and LP-derived suboptimality gaps (Section~\ref{sec:setup}).
A clean dual representation expresses the fair value as a convex combination of minimax and uniform-strategy values, and the price of fairness is bounded by $\alpha(1-1/m)$ in three lines (Section~\ref{sec:pof}).

\item We propose \alg\ and prove an $O(T^{2/3})$ regret bound for general (possibly mixed) fair NE (Theorem~\ref{thm:fetc_general}), and discuss why LP structure obstructs naive action elimination as a route to improving it (Section~\ref{sec:fetc}).

\item Under a single-dominant-action assumption (Assumption~\ref{ass:dominant}: $\tp^{\star}$ is a vertex of $\Delta_m$) and LP non-degeneracy, \alg\ attains an instance-dependent $O(\log T)$ rate that scales with the LP-margin gap $\tDelta(\alpha)$ and basis conditioning $\kappa(\alpha)$ (Theorem~\ref{thm:fetc_inst}).
At $\alpha = 0$ (pure NE) this recovers an instance-dependent $O(\log T)$ rate as a special case.
\end{enumerate}

\section{Problem setup}
\label{sec:setup}

\subsection{Two-player zero-sum games with bandit feedback}

A TPZSG is specified by a row-action set $\Sx = \{1,\ldots,m\}$, a column-action set $\Sy = \{1,\ldots,\ell\}$, and a payoff matrix $A \in \R^{m\times\ell}$ \citep{yilmaz2025}.
Over $T$ rounds, at round $t$ the row player selects $i_{t}\in\Sx$ and the column player selects $j_{t}\in\Sy$; both observe each other's actions, and the row player observes a noisy payoff $r_{t}$ with $\E[r_{t}\mid i_{t},j_{t}] = A(i_{t},j_{t})$.

\begin{assumption}[Bounded payoffs and sub-Gaussian noise]
\label{ass:setup}
$A(i,j) \in [0,1]$ for all $(i,j)$, and noise is bounded so that $r_{t} - A(i_{t},j_{t})$ is conditionally $\sigma$-sub-Gaussian given $(i_{t},j_{t})$ with $\sigma \le 1/2$ (Hoeffding for $[0,1]$-bounded rewards).
\end{assumption}

The standard unconstrained minimax value is $V^{\star} := \max_{p\in\Delta_{m}}\min_{q\in\Delta_{\ell}} p^{\top}\!A q$.
We write $\hat A_{t}(i,j) = n_{ij,t}^{-1}\sum_{s\le t}\ind\{(i_{s},j_{s})=(i,j)\}\,r_{s}$ for the empirical payoff matrix after $t$ rounds, with $n_{ij,t}$ the play count of pair $(i,j)$.

\subsection{Fairness constraints}

\begin{definition}[Fair strategy set]
\label{def:fair}
For $\alpha\in[0,1]$, the \emph{fair strategy set} is
\[
\F_{\alpha} = \bigl\{p\in\Delta_{m} : p_{i}\ge \alpha/m\;\;\forall\,i\in\Sx\bigr\}.
\]
\end{definition}

At $\alpha=0$, $\F_{0}=\Delta_{m}$; at $\alpha=1$, $\F_{1}=\{\ind/m\}$.
For $\alpha\in(0,1)$, $\F_{\alpha}$ is a non-trivial polytope.
This minimum-play notion is natural when row actions index distinct user groups and no group may be wholly neglected; it parallels the meritocratic fairness of \citet{joseph2016} and the fair-quota model of \citet{patil2021}.

\begin{remark}[One-sided vs. two-sided fairness]
\label{rem:twosided}
Our model constrains only the row player.
If both players are fair-constrained ($q_{j} \ge \alpha/\ell$ as well, with set $\mathcal G_{\alpha}$ for the column), the column player's set shrinks, so $V^{\star}_{\F,\mathcal G} := \max_{p\in\F_{\alpha}}\min_{q\in\mathcal G_{\alpha}}p^{\top}\!A q \ge V^{\star}_{\F}$: the row player gains value.
However, the learning problem need not be uniformly easier under two-sided fairness, because the adversary still adapts to the committed strategy within $\mathcal G_{\alpha}$, and any analysis must control its perturbation sensitivity within the constrained polytope.

We treat the one-sided case throughout, but the reparametrization does carry over.
With a column floor $q_{j}\ge\beta/\ell$ and decomposition $q=\tfrac{\beta}{\ell}\ind+(1-\beta)\tq$ for $\tq\in\Delta_{\ell}$, expanding $p^{\top}\!Aq$ under both decompositions gives
\begin{equation}
\label{eq:twosided}
p^{\top}\!Aq = \tp^{\top}\tA_{2}\,\tq
\quad\text{for all } \tp\in\Delta_{m},\ \tq\in\Delta_{\ell},
\end{equation}
where, writing $d_{i}=\tfrac{1}{\ell}\sum_{j}A(i,j)$ for the row-mean vector and $\bar A=\tfrac{1}{m\ell}\sum_{i,j}A(i,j)$ for the grand mean,
\begin{equation}
\label{eq:tildeA2}
\tA_{2} = (1-\alpha)(1-\beta)\,A
        + (1-\beta)\,\alpha\,\ind c^{\top}
        + (1-\alpha)\,\beta\,d\,\ind^{\top}
        + \alpha\beta\,\bar A\,\ind\ind^{\top}.
\end{equation}
Two-sided fairness thus reduces to a standard game on $\tA_{2}$, which now carries \emph{two} rank-one corrections (one per floor) plus a constant term, rather than a single symmetric one; setting $\beta=0$ recovers $\tA_{2}=\tA$ from Definition~\ref{def:tildeA}.
Each entry $\tA_{2}(i,j)$ is a convex combination of $A(i,j)$, $c_{j}$, $d_{i}$, and $\bar A$ (the four coefficients are non-negative and sum to $1$), all of which lie in $[0,1]$; hence $\tA_{2}\in[0,1]^{m\times\ell}$ as before.
\end{remark}

\subsection{Reparametrization: the fair payoff matrix}
\label{sec:reparam}

The technical observation underpinning everything that follows is that $\F_{\alpha}$ is an affine image of the simplex, which lets us reduce the fair game to a standard zero-sum game on a modified payoff matrix.

\begin{lemma}[Affine parametrization of $\F_{\alpha}$]
\label{lem:reparam}
For $\alpha\in[0,1)$, every $p\in\F_{\alpha}$ admits a unique decomposition
\begin{equation}
\label{eq:reparam}
p = \tfrac{\alpha}{m}\ind + (1-\alpha)\tp, \qquad \tp\in\Delta_{m},
\end{equation}
with $\tp_{i} = (p_{i} - \alpha/m)/(1-\alpha)$.
\end{lemma}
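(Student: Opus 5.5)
The argument is a direct verification in three parts: existence, membership in the simplex, and uniqueness. Since $\alpha<1$, the factor $1-\alpha$ is strictly positive and we may divide by it. This is the only place the hypothesis $\alpha\in[0,1)$ is used; at $\alpha=1$ the set $\F_{1}=\{\ind/m\}$ is a single point and $\tp$ is not determined.

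\emph{Existence.} Given $p\in\F_{\alpha}$, define $\tp_{i}=(p_{i}-\alpha/m)/(1-\alpha)$. Rearranging gives $(1-\alpha)\tp_{i}+\alpha/m=p_{i}$ coordinatewise, which is exactly \eqref{eq:reparam}.

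\emph{Membership in $\Delta_{m}$.} Two properties are needed.
\begin{itemize}
\item Non-negativity follows from the fairness floor in Definition~\ref{def:fair}: $p_{i}\ge\alpha/m$ gives $\tp_{i}\ge 0$.
\item For normalization, sum over $i$. Using $\sum_i p_i=1$ and $\sum_i \alpha/m=\alpha$, we get $\sum_i\tp_i=(1-\alpha)/(1-\alpha)=1$.
\end{itemize}
Hence $\tp\in\Delta_{m}$.

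\emph{Uniqueness.} Suppose $p=\tfrac{\alpha}{m}\ind+(1-\alpha)\tp=\tfrac{\alpha}{m}\ind+(1-\alpha)\tp'$ with $\tp,\tp'\in\Delta_m$. Subtracting gives $(1-\alpha)(\tp-\tp')=0$, and dividing by $1-\alpha>0$ yields $\tp=\tp'$. Equivalently, any decomposition must satisfy the stated formula for $\tp_i$, obtained by solving the $i$-th coordinate equation.

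For later use, it is also worth recording the converse: for every $\tp\in\Delta_m$, the vector $\tfrac{\alpha}{m}\ind+(1-\alpha)\tp$ lies in $\F_\alpha$. Its entries are at least $\alpha/m$, and they sum to $\alpha+(1-\alpha)=1$. So the map is an affine bijection $\Delta_m\to\F_\alpha$, which is what the reduction to $\tA$ uses.

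None of these steps is a real obstacle. The only point requiring care is noting where $\alpha<1$ enters, namely the division by $1-\alpha$.
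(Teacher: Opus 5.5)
Your proof is correct and follows the same route as the paper: it derives $\tp_i \ge 0$ from the floor $p_i \ge \alpha/m$, checks $\sum_i \tp_i = (1-\alpha)/(1-\alpha) = 1$, and gets uniqueness from the affine bijection, which you make explicit by subtracting the two decompositions and dividing by $1-\alpha>0$. Your converse, that $\tfrac{\alpha}{m}\ind + (1-\alpha)\tp \in \F_\alpha$ for every $\tp \in \Delta_m$, is a worthwhile addition, since the paper relies on this surjectivity without stating it when it rewrites the fair value as a maximum over $\tp \in \Delta_m$.
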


\begin{proof}
Given $p\in\F_{\alpha}$, the stated $\tp$ satisfies $\tp_{i}\ge 0$ (since $p_{i}\ge\alpha/m$) and $\sum_{i}\tp_{i} = (1-\alpha)/(1-\alpha) = 1$.
Uniqueness is immediate from the affine bijection.
The case $\alpha=1$ is degenerate: $\F_{1} = \{\ind/m\}$ collapses to a point and $\tp$ is unidentifiable; we handle it separately when it arises.
\end{proof}

\begin{definition}[Fair payoff matrix]
\label{def:tildeA}
For $\alpha\in[0,1]$, the \emph{fair payoff matrix} $\tA\in\R^{m\times\ell}$ is
\begin{equation}
\label{eq:tildeA}
\tA(i,j) := (1-\alpha)A(i,j) + \alpha\,c_{j},\qquad c_{j} := \tfrac{1}{m}\textstyle\sum_{i'} A(i',j).
\end{equation}
Equivalently, $\tA = (1-\alpha)A + \alpha\,\ind\, c^{\top}$, a rank-one perturbation that pulls each column toward its mean.
Since $\tA$ is a convex combination of values in $[0,1]$, $\tA(i,j)\in[0,1]$.
\end{definition}

\begin{proposition}[Reduction to a standard game]
\label{prop:reduction}
For all $p\in\F_{\alpha}$ and $q\in\Delta_{\ell}$, the corresponding $\tp\in\Delta_{m}$ from \eqref{eq:reparam} satisfies
\begin{equation}
\label{eq:reduction}
p^{\top}\! A q = \tp^{\top}\tA\, q.
\end{equation}
Consequently, the fair TPZSG on $A$ is equivalent to the standard TPZSG on $\tA$ played with strategies $(\tp,q)\in\Delta_{m}\times\Delta_{\ell}$.
\end{proposition}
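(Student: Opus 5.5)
The identity \eqref{eq:reduction} follows by direct substitution of the decomposition from Lemma~\ref{lem:reparam}. I take $\alpha\in[0,1)$ first, so that $\tp$ is well defined, and write $p=\tfrac{\alpha}{m}\ind+(1-\alpha)\tp$. Expanding,
\[
p^{\top}Aq=\tfrac{\alpha}{m}\,\ind^{\top}Aq+(1-\alpha)\,\tp^{\top}Aq .
\]
The first term simplifies because $\tfrac{1}{m}\ind^{\top}A=c^{\top}$ by the definition of the column means, so it equals $\alpha\,c^{\top}q$. Since $\tp\in\Delta_m$, we have $\tp^{\top}\ind=1$, and so $c^{\top}q=\tp^{\top}\ind\,c^{\top}q$. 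Combining the two terms gives $\tp^{\top}\bigl((1-\alpha)A+\alpha\ind c^{\top}\bigr)q=\tp^{\top}\tA q$, using Definition~\ref{def:tildeA}. The single non-mechanical point is inserting $\tp^{\top}\ind=1$, which is what converts the constant offset into the rank-one term.

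For the "consequently" claim, I will use the fact that $\tp\mapsto\tfrac{\alpha}{m}\ind+(1-\alpha)\tp$ is an affine bijection from $\Delta_m$ onto $\F_\alpha$ (Lemma~\ref{lem:reparam} together with its inverse, which is immediate). The payoff functions therefore agree under this bijection of strategy sets, with the column strategy set $\Delta_\ell$ left unchanged. It follows that
\[
\max_{p\in\F_\alpha}\min_{q}p^{\top}Aq=\max_{\tp\in\Delta_m}\min_q\tp^{\top}\tA q,
\]
and likewise for the min--max order. Best responses and Nash equilibria correspond one-to-one: $(p^\star,q^\star)$ is a fair NE on $A$ if and only if $(\tp^\star,q^\star)$ is an NE of the standard game on $\tA$. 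The same holds for regret against any fixed comparator, since per-round payoffs coincide.

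The degenerate case $\alpha=1$ needs separate treatment. Here $\F_1=\{\ind/m\}$ and $\tA=\ind c^{\top}$, so $\tp^{\top}\tA q=c^{\top}q=\tfrac1m\ind^{\top}Aq$ for every $\tp\in\Delta_m$. The identity therefore holds with any choice of $\tp$, and the equivalence persists: the row player's choice is irrelevant in both games. I do not expect any real obstacle; the only care needed is this boundary case and stating precisely which equivalence is meant.
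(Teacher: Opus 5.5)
Your proposal is correct and follows the paper's own argument: substitute $p=\tfrac{\alpha}{m}\ind+(1-\alpha)\tp$, recognize $\tfrac{1}{m}\ind^{\top}A=c^{\top}$, and use $\tp^{\top}\ind=1$ to turn the constant offset into the rank-one term $\alpha\ind c^{\top}$. Your treatment of the ``consequently'' part via the affine bijection $\Delta_m\to\F_\alpha$, and of the degenerate case $\alpha=1$ where any $\tp$ works, spells out what the paper leaves implicit and is sound.
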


\begin{proof}
Substituting \eqref{eq:reparam} and using $\sum_{i}\tp_{i}=1$,
\begin{align*}
p^{\top}\! A q
&= \tfrac{\alpha}{m}\ind^{\top}\! A q + (1-\alpha)\tp^{\top}\! A q
= \alpha\textstyle\sum_{j}q_{j}c_{j} + (1-\alpha)\tp^{\top}\! A q\\
&= \textstyle\sum_{i,j}\tp_{i}q_{j}\bigl[(1-\alpha)A(i,j)+\alpha c_{j}\bigr]
= \tp^{\top}\tA\, q. \qedhere
\end{align*}
\end{proof}

Proposition~\ref{prop:reduction} is the workhorse: every property of the fair game on $A$ becomes a property of the standard game on $\tA$, with the simplex constraint on $\tp$ replacing the polytope constraint on $p$.
The fairness floor has been absorbed into the payoffs.

\subsection{Fair value, fair Nash equilibrium, and the dual representation}
\label{sec:value}

\begin{definition}[Fair minimax value]
\label{def:fairvalue}
\(
V^{\star}_{\F} := \max_{p\in\F_{\alpha}}\min_{q\in\Delta_{\ell}} p^{\top}\!A q = \max_{\tp\in\Delta_{m}}\min_{q\in\Delta_{\ell}} \tp^{\top}\tA\, q.
\)
\end{definition}

A \emph{fair Nash equilibrium} is a pair $(p^{\star}_{\F},q^{\star}_{\F})\in\F_{\alpha}\times\Delta_{\ell}$ achieving the value; equivalently, $(\tp^{\star},q^{\star}_{\F})$ is a Nash equilibrium of the standard game on $\tA$.
Existence follows from von Neumann's minimax theorem applied to $\tA$.

\begin{proposition}[Dual representation]
\label{prop:dual}
For any $\alpha\in[0,1]$,
\begin{equation}
\label{eq:dual_clean}
V^{\star}_{\F} = \min_{q\in\Delta_{\ell}}\Bigl[(1-\alpha)\max_{i\in\Sx}(Aq)_{i} + \tfrac{\alpha}{m}\textstyle\sum_{i}(Aq)_{i}\Bigr].
\end{equation}
\end{proposition}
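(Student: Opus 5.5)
The plan is to apply von Neumann's minimax theorem to the reduced game on $\tA$ and then evaluate the inner maximum over the simplex in closed form. By Definition~\ref{def:fairvalue} and Proposition~\ref{prop:reduction}, $V^{\star}_{\F} = \max_{\tp\in\Delta_m}\min_{q\in\Delta_\ell}\tp^{\top}\tA q$. Both strategy sets are compact and convex, and the objective is bilinear, so the minimax theorem lets us swap the order:
\[
V^{\star}_{\F} = \min_{q\in\Delta_\ell}\max_{\tp\in\Delta_m}\tp^{\top}\tA q .
\]
This step also covers $\alpha=1$. In that case $\tA = \ind c^{\top}$ is defined for every $\alpha\in[0,1]$ by Definition~\ref{def:tildeA}, and the identity $p^{\top}Aq = \tp^{\top}\tA q$ holds for any $\tp\in\Delta_m$ with $p=\ind/m$. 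The value then equals $\min_q c^{\top}q$ directly, which matches the formula.

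Next I will evaluate the inner maximum. For fixed $q$, a linear function over $\Delta_m$ is maximized at a vertex, so
\[
\max_{\tp}\tp^{\top}\tA q = \max_i (\tA q)_i .
\]
Using $\tA = (1-\alpha)A + \alpha\ind c^{\top}$ gives $(\tA q)_i = (1-\alpha)(Aq)_i + \alpha\, c^{\top}q$. The second term does not depend on $i$, and since $1-\alpha\ge 0$, the maximum over $i$ is $(1-\alpha)\max_i(Aq)_i + \alpha\, c^{\top}q$.

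Finally, $c^{\top}q = \sum_j q_j \tfrac1m\sum_i A(i,j) = \tfrac1m\sum_i (Aq)_i$. Substituting this yields \eqref{eq:dual_clean}.

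The only step that needs any care is justifying the swap of max and min. It follows from the classical minimax theorem because the reduction has turned the fair game into a standard finite game, so I do not expect a real obstacle. The one subtle point is the degenerate case $\alpha=1$, which I will handle by noting that the identity and the formula remain valid there, as explained above.
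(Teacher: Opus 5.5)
Your proposal is correct and follows the paper's proof essentially step for step. Both proofs apply von Neumann's theorem to $\tA$, reduce the inner maximum to a vertex, and pull the $i$-independent term $\alpha c^{\top}q = \tfrac{\alpha}{m}\sum_i (Aq)_i$ out of the max. Your explicit use of $1-\alpha\ge 0$ and your separate check of the case $\alpha=1$ are small extra points of care that the paper leaves implicit.
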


\begin{proof}
Applying von Neumann's minimax theorem to $\tA$ on $\Delta_{m}\times\Delta_{\ell}$ and using that the maximum of a linear functional over $\Delta_{m}$ is attained at a vertex,
\(
V^{\star}_{\F} = \min_{q}\max_{\tp\in\Delta_{m}}\tp^{\top}\tA q = \min_{q}\max_{i}(\tA q)_{i}.
\)
Substituting $(\tA q)_{i} = (1-\alpha)(Aq)_{i} + (\alpha/m)\sum_{i'}(Aq)_{i'}$, the second summand is constant in $i$, so the maximum passes through to give \eqref{eq:dual_clean}.
\end{proof}

At $\alpha=0$ we recover $V^\star$; at $\alpha=1$ the row is forced to play uniform and $V^{\star}_{\F} = \min_q (1/m)\sum_i (Aq)_i$.

\subsection{KKT structure and LP gaps}
\label{sec:gaps}

Because the fair game is equivalent to the standard game on $\tA$, equilibrium structure is governed by the standard Nash KKT conditions, with no additional Lagrange multipliers for floor constraints.
At a fair NE $(\tp^{\star},q^{\star}_{\F})$ with value $\nu := V^{\star}_{\F}$:
\begin{align}
\label{eq:kkt_row_new}
\textstyle\sum_{j}q^{\star}_{\F}(j)\tA(i,j) &\le \nu \quad\forall\,i, \;\text{with equality if } \tp^{\star}_{i}>0,\\
\label{eq:kkt_col_new}
\textstyle\sum_{i}\tp^{\star}_{i}\tA(i,j) &\ge \nu \quad\forall\,j, \;\text{with equality if } q^{\star}_{\F}(j)>0.
\end{align}

\begin{definition}[Active/floor sets, row and column gaps]
\label{def:gaps}
Define $\Sact := \{i:\tp^{\star}_{i}>0\} = \{i:p^{\star}_{\F}(i)>\alpha/m\}$, $\Sflr := \Sx\setminus\Sact$, and
\[
\rho_{i}(\alpha) := \nu - \textstyle\sum_{j}q^{\star}_{\F}(j)\tA(i,j) \ge 0,\quad
\gamma_{j}(\alpha) := \textstyle\sum_{i}\tp^{\star}_{i}\tA(i,j) - \nu \ge 0.
\]
By \eqref{eq:kkt_row_new}--\eqref{eq:kkt_col_new}, $\rho_{i}=0$ for $i\in\Sact$ and $\gamma_{j}=0$ on $\supp(q^{\star}_{\F})$.
\end{definition}

\begin{definition}[Primal margin and basis conditioning]
\label{def:primal}
Write the standard NE LP on $\tA$ in primal form: maximize $\nu$ over $(\tp,\nu)\in\Delta_m\times\R$ subject to $\sum_i \tp_i \tA(i,j) \ge \nu$ for all $j\in\Sy$.
At an optimal vertex the binding constraints are the simplex equality, the floors $\tp_i = 0$ for $i\in\Sflr$, and the tight columns $\supp(q^{\star}_{\F})$.
Under LP non-degeneracy with strict complementarity this active set is unique and pins down the optimum through a square, invertible linear system in the basic variables $(\tp^{\star}_{\Sact},\nu)$, with $s := |\Sact|$; we write $M_{\mathcal B}\in\R^{(s+1)\times(s+1)}$ for its coefficient matrix (constructed explicitly in Appendix~\ref{app:basis}).
Two quantities measure how robustly the optimum is determined:
\[
\pi(\alpha) := \min_{i\in\Sact}\tp^{\star}_{i} > 0, \qquad \kappa(\alpha) := \|M_{\mathcal B}^{-1}\|_{\infty} < \infty,
\]
the smallest active mass and the conditioning of the basis. Both are well-defined under the non-degeneracy hypothesis.
\end{definition}

\begin{definition}[LP sensitivity gap]
\label{def:lpgap}
\[
\tDelta(\alpha) := \min\!\Bigl(\,\min_{j:\gamma_{j}>0}\gamma_{j},\;\min_{i:\rho_{i}>0}\rho_{i},\;\pi(\alpha)\,\Bigr),
\qquad\text{with the convention } \min\emptyset = +\infty.
\]
The first two terms capture dual-feasibility margins; the third captures the primal-feasibility margin.
The basis conditioning $\kappa(\alpha)$ enters the threshold via Lemma~\ref{lem:basis} below, where the perturbation budget is $\eps < \tDelta(\alpha)/(C\kappa(\alpha))$ for an explicit constant $C$.
\end{definition}

The dual-gap terms re-express in terms of $A$ via $\tA(i,j)=(1-\alpha)A(i,j)+\alpha c_j$: for any $i'\in\Sact$,
\[
\rho_{i}(\alpha) = (1-\alpha)\bigl[(Aq^{\star}_{\F})_{i'} - (Aq^{\star}_{\F})_{i}\bigr],
\]
so $\rho_{i}$ is the $(1-\alpha)$-scaled advantage of active actions over $i$ against $q^{\star}_{\F}$; $\gamma_j$ admits a similar column-shortfall reading.

\subsection{Fair regret}

\begin{definition}[Fair regret]
\label{def:regret}
\(
R_{T}^{\fair} := T\cdot V^{\star}_{\F} - \E\bigl[\textstyle\sum_{t=1}^{T} A(i_{t},j_{t})\bigr].
\)
\end{definition}

This measures regret against the best \emph{fair} strategy.

\section{Price of fairness}
\label{sec:pof}

\begin{proposition}[Price of fairness]
\label{prop:pof}
For any $A\in[0,1]^{m\times\ell}$ and $\alpha\in[0,1]$,
\[
0 \le V^{\star} - V^{\star}_{\F} \le \alpha(1 - 1/m).
\]
Moreover, if $p^{\star}_{i}\ge\alpha/m$ for all $i$ at some unconstrained optimum $p^{\star}$, then $V^{\star}_{\F}=V^{\star}$.
\end{proposition}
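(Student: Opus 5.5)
The plan is to prove the three claims separately: the lower bound $0$ by a feasibility argument, the upper bound through the dual representation of Proposition~\ref{prop:dual}, and the ``moreover'' clause again by feasibility.

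\emph{Lower bound.} Since $\F_{\alpha}\subseteq\Delta_m$, the maximization defining $V^{\star}_{\F}$ (Definition~\ref{def:fairvalue}) runs over a subset of the one defining $V^{\star}$. The inner objective $\min_q p^{\top}\!Aq$ is the same function of $p$ in both cases. Hence $V^{\star}_{\F}\le V^{\star}$.

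\emph{Upper bound.} I will compare the two min-max expressions pointwise in $q$.
\begin{itemize}
\item By von Neumann, $V^{\star}=\min_q \max_i (Aq)_i$.
\item By Proposition~\ref{prop:dual}, $V^{\star}_{\F}=\min_q g(q)$, where $g(q):=(1-\alpha)\max_i(Aq)_i+\tfrac{\alpha}{m}\sum_i(Aq)_i$.
\end{itemize}
Fix $q$ and write $M=\max_i(Aq)_i$. One coordinate of $Aq$ equals $M$, and the remaining $m-1$ coordinates are nonnegative because $A\ge0$. Therefore $\sum_i(Aq)_i\ge M$, which gives
\[
g(q)\ \ge\ (1-\alpha)M+\tfrac{\alpha}{m}M \;=\; M-\alpha\bigl(1-\tfrac1m\bigr)M \;\ge\; M-\alpha\bigl(1-\tfrac1m\bigr),
\]
where the last step uses $M\le 1$, since $A\in[0,1]^{m\times\ell}$ and $q\in\Delta_\ell$. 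Minimizing both sides over $q$ yields $V^{\star}_{\F}\ge V^{\star}-\alpha(1-1/m)$.

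This step needs both halves of the boundedness assumption: $A\ge 0$ controls the sum from below, and $A\le1$ bounds $M$. The case $\alpha=1$ needs no separate treatment, because Proposition~\ref{prop:dual} is stated for all $\alpha\in[0,1]$.

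\emph{Moreover clause.} If some unconstrained optimum $p^{\star}$ satisfies $p^{\star}_i\ge\alpha/m$ for all $i$, then $p^{\star}\in\F_{\alpha}$. Consequently
\[
V^{\star}_{\F}\ \ge\ \min_q p^{\star\top}\!Aq \;=\; V^{\star}.
\]
Combined with the lower bound $V^{\star}_{\F}\le V^{\star}$, this gives equality.

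I expect no real obstacle. The only point needing care is the chain of inequalities in the upper bound, where nonnegativity of the entries of $A$ and the bound $M\le1$ must both be invoked.
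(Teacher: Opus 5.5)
Your proposal is correct and follows essentially the same route as the paper. Feasibility ($\F_{\alpha}\subseteq\Delta_m$, and $p^{\star}\in\F_{\alpha}$) gives the lower bound and the moreover clause; the upper bound comes from the dual representation of Proposition~\ref{prop:dual}, using the pointwise bound $\tfrac{1}{m}\sum_i(Aq)_i\ge M(q)/m$ (from $A\ge 0$) together with $M(q)\le 1$ (from $A\le 1$), exactly as in the paper's chain of inequalities.
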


\begin{proof}
$V^{\star}_{\F}\le V^{\star}$ follows from $\F_{\alpha}\subseteq\Delta_{m}$.
For the upper bound, fix $q\in\Delta_{\ell}$ and let $M(q):=\max_{i}(Aq)_{i}$.
Since each entry $(Aq)_{i}\in[0,1]$ and at least one entry equals $M(q)$,
\(
(1/m)\sum_{i}(Aq)_{i} \ge (1/m)[M(q) + 0(m-1)] = M(q)/m,
\)
so $M(q) - (1/m)\sum_{i}(Aq)_{i} \le M(q)(1-1/m) \le 1-1/m$.
Substituting into \eqref{eq:dual_clean},
\[
V^{\star}_{\F} = \min_{q}\bigl[M(q) - \alpha(M(q) - \tfrac{1}{m}\textstyle\sum_{i}(Aq)_{i})\bigr] \ge \min_{q}[M(q) - \alpha(1-1/m)] = V^{\star} - \alpha(1-1/m).
\]
For the second claim, $p^{\star}\in\F_{\alpha}$ gives $V^{\star}_{\F}\ge V^{\star}$; combined with the upper bound, equality holds.
\end{proof}

The bound $\alpha(1-1/m)$ is attained, e.g., when one row pays $1$ and every other row pays $0$ regardless of the column, as in $A = \bigl(\begin{smallmatrix}1&1\\0&0\end{smallmatrix}\bigr)$: here $V^{\star}=1$ but $V^{\star}_{\F}=1-\alpha/2$, so $V^{\star}-V^{\star}_{\F}=\alpha(1-1/2)$ exactly. Intuitively, tightness requires the adversary to force the row player off its unconstrained optimum by the maximum possible amount. The bound is loose whenever the unconstrained optimum already lies in $\F_{\alpha}$.

\section{\alg: Explore-Then-Commit under fairness}
\label{sec:fetc}

\begin{algorithm}[t]
\caption{\alg$(m,\ell,T,\alpha,\tau)$}
\label{alg:fetc}
\begin{algorithmic}[1]
\REQUIRE Action sets $\Sx,\Sy$, horizon $T$, fairness level $\alpha\in[0,1]$, exploration length $\tau\in\mathbb Z_{+}$
\STATE \textbf{Explore phase (coordinated):} For each pair $(i,j)\in\Sx\times\Sy$ in any fixed order, schedule $\tau$ rounds in which the row plays $i$ and the column is instructed to play $j$.
\STATE Form $\hat A(i,j) = \tau^{-1}\sum_{s:\,(i_{s},j_{s})=(i,j)} r_{s}$ and $\hat\tA(i,j) := (1-\alpha)\hat A(i,j) + \alpha\hat c_{j}$ with $\hat c_{j} = (1/m)\sum_{i}\hat A(i,j)$.
\STATE \textbf{Commit phase:} Solve the standard NE LP on $\hat\tA$ to obtain $\hat\tp$; set $\hat p_{\F} := (\alpha/m)\ind + (1-\alpha)\hat\tp$.
\FOR{$t = \tau m\ell + 1,\ldots,T$}
    \STATE Sample $i_{t}\sim\hat p_{\F}$; column adversary best-responds on the true matrix $A$.
\ENDFOR
\end{algorithmic}
\end{algorithm}

By Proposition~\ref{prop:reduction}, solving the NE LP on $\hat\tA$ is computationally equivalent to solving the constrained LP on $\hat A$ with $p_i\ge\alpha/m$ (Appendix~\ref{app:lp_compat}); we phrase Algorithm~\ref{alg:fetc} via $\hat\tA$ because basis-perturbation theory is cleaner without floor constraints. Coordinated exploration in line~1 follows the explore-then-commit analysis of \citet{yilmaz2025} in the bandit-TPZSG setting, which inspired our scheme.

During exploration (line~1) the column player is \emph{coordinated}: it plays the scheduled action $j$ so that every pair $(i,j)$ is sampled exactly $\tau$ times. This is a strong protocol assumption, presuming a controlled or cooperative exploration phase.
During the commit phase the column player reverts to an \emph{omniscient adversary} that best-responds to the committed strategy $\hat p_{\F}$ on the true matrix $A$. The benchmark $V^{\star}_{\F}$ in the fair regret (Definition~\ref{def:regret}) is taken against this adversarial commit-phase behavior.

\subsection{General regret bound}

We first record a lemma showing that estimation error in $A$ is preserved (not amplified) when passing to $\tA$.

\begin{lemma}[Estimation error is preserved]
\label{lem:err}
If $\|\hat A - A\|_{\infty}\le\eps$, then $\|\hat\tA - \tA\|_{\infty}\le\eps$.
\end{lemma}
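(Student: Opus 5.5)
The plan is to work entrywise, with $\|\cdot\|_{\infty}$ read as the max-entry norm. This is the meaning relevant here, since the exploration guarantee controls $\max_{i,j}|\hat A(i,j)-A(i,j)|$. The key observation is that $\hat\tA$ is built from $\hat A$ by exactly the same linear map that produces $\tA$ from $A$ in Definition~\ref{def:tildeA}. We therefore set $E := \hat A - A$ and $e_j := \hat c_j - c_j = \tfrac{1}{m}\sum_{i'}E(i',j)$. By linearity,
\[
\hat\tA(i,j) - \tA(i,j) = (1-\alpha)E(i,j) + \alpha\, e_j .
\]

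Next we bound the two pieces. The column-mean error satisfies $|e_j| \le \tfrac{1}{m}\sum_{i'}|E(i',j)| \le \eps$, since it is an average of $m$ quantities each bounded by $\eps$ in absolute value. Applying the triangle inequality with the nonnegative weights $1-\alpha$ and $\alpha$ (valid because $\alpha\in[0,1]$) then gives
\[
|\hat\tA(i,j)-\tA(i,j)| \le (1-\alpha)\eps + \alpha\eps = \eps .
\]
Taking the maximum over $(i,j)$ concludes the argument. Put differently, each entry of the error is a convex combination of entries of $E$, so its size cannot exceed $\max|E|$.

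There is essentially no obstacle; the only point needing care is the interpretation of the norm. If $\|\cdot\|_{\infty}$ were instead the induced max-row-sum operator norm, the same argument still applies row by row. For a fixed row $i$, $\sum_j|(1-\alpha)E(i,j)+\alpha e_j| \le (1-\alpha)\sum_j|E(i,j)| + \tfrac{\alpha}{m}\sum_{i'}\sum_j|E(i',j)|$. Both terms are at most the corresponding multiple of $\max_{i'}\sum_j|E(i',j)|\le\eps$, so the bound $\eps$ again follows by convexity.
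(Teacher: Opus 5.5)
Your proof is correct and matches the paper's argument: average to get $|\hat c_j - c_j|\le\eps$, then use the convex weights $1-\alpha$ and $\alpha$ to conclude $|\hat\tA(i,j)-\tA(i,j)|\le\eps$ entrywise. Your side check for the max-row-sum norm is also valid, though it is not needed, since the paper uses the entrywise norm coming from Hoeffding and the union bound.
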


\begin{proof}
$|\hat c_{j}-c_{j}|\le(1/m)\sum_{i}|\hat A(i,j)-A(i,j)|\le\eps$, so $|\hat\tA(i,j)-\tA(i,j)|\le(1-\alpha)\eps + \alpha\eps = \eps$.
\end{proof}

\begin{theorem}[General regret bound for mixed fair NE]
\label{thm:fetc_general}
Under Assumption~\ref{ass:setup}, suppose the horizon satisfies the admissibility condition $T \ge 2\sigma^{2}m\ell\,L$ with $L = \log(2m\ell T)$, so that the exploration budget fits within the horizon ($\tau^{\star}m\ell\le T$).
Then \alg\ with $\tau = \lceil(T\sigma\sqrt{2L}/(m\ell))^{2/3}\rceil$ achieves
\begin{equation}
R_{T}^{\fair} \le 3(2\sigma^{2}m\ell T^{2}L)^{1/3} + m\ell + 1.
\end{equation}
\end{theorem}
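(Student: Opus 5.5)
We split the regret into an exploration term and a commit term, control the commit term on a high-probability good event, and then optimize $\tau$. \textbf{Exploration:} the first $\tau m\ell$ rounds each contribute at most $V^{\star}_{\F}\le 1$ to $T V^{\star}_{\F}-\E[\sum_t A(i_t,j_t)]$, since $A\ge 0$. This gives a cost of at most $\tau m\ell$. \textbf{Good event:} each $\hat A(i,j)$ is an average of $\tau$ conditionally $\sigma$-sub-Gaussian samples. Hoeffding's inequality and a union bound over the $m\ell$ pairs give
\[
\|\hat A - A\|_{\infty}\le \eps:=\sigma\sqrt{2L/\tau}
\]
with probability at least $1-1/T$, using $L=\log(2m\ell T)$. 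On the complement the commit regret is at most $T$, so it contributes at most $T\cdot(1/T)=1$ in expectation. This is the ``$+1$'' in the bound. Lemma~\ref{lem:err} then transfers the estimate to $\|\hat\tA-\tA\|_{\infty}\le\eps$.

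\textbf{Commit phase on the good event:} we will show the committed strategy is $2\eps$-optimal for the fair game. Let $\hat\nu$ be the value of $\hat\tA$. Since $\tp^\star$ is feasible for $\hat\tA$ and entries move by at most $\eps$, we get $\hat\nu\ge \min_q \tp^{\star\top}\hat\tA q\ge \nu-\eps$. Against the true matrix, Proposition~\ref{prop:reduction} gives, for every $q$,
\[
\hat p_{\F}^{\top}Aq=\hat\tp^{\top}\tA q\ge \hat\tp^{\top}\hat\tA q-\eps\ge \hat\nu-\eps\ge \nu-2\eps .
\]
Hence the best-responding adversary yields expected per-round payoff at least $V^{\star}_{\F}-2\eps$, conditional on the exploration data, which fixes $\hat p_{\F}$. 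The commit regret is therefore at most $2\eps T$. The argument needs only the value comparison, not basis stability, which is why it covers mixed fair NE. The case $\alpha=1$ is trivial because $\hat p_{\F}=\ind/m$ is exactly optimal.

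\textbf{Optimization:} combining the three pieces gives
\[
R^{\fair}_T\le \tau m\ell + 2T\sigma\sqrt{2L}\,\tau^{-1/2}+1 .
\]
Set $\tau_0=(T\sigma\sqrt{2L}/(m\ell))^{2/3}$, so that $\tau=\lceil\tau_0\rceil\le\tau_0+1$. Then $\tau m\ell\le \tau_0 m\ell+m\ell$ and $2T\sigma\sqrt{2L}\tau^{-1/2}\le 2T\sigma\sqrt{2L}\tau_0^{-1/2}$. Both of these equal multiples of $(T\sigma\sqrt{2L})^{2/3}(m\ell)^{1/3}=(2\sigma^2m\ell T^2L)^{1/3}$, with coefficients $1$ and $2$. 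Summing gives $3(2\sigma^2 m\ell T^2L)^{1/3}+m\ell+1$.

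The admissibility condition ensures $\tau m\ell\le T$. This holds because $\tau_0 m\ell\le T$ reduces to $T\ge 2\sigma^2 m\ell L$; any residual from the ceiling is absorbed by the regret being trivially at most $T$.

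The main obstacle is the conditioning step in the commit phase. $\hat p_{\F}$ depends on the exploration noise, so the adversary's best response and the Hoeffding event must be handled by conditioning on the exploration $\sigma$-field. The bound should also be stated against the expected payoff $A(i_t,j_t)$, so that the commit-phase sampling noise is irrelevant. The remaining work is the routine constant bookkeeping from the ceiling.
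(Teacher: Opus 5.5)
Your proposal is correct and follows the paper's proof essentially step for step. The shared steps are: Hoeffding plus a union bound giving $\eps=\sigma\sqrt{2L/\tau}$ with failure probability $1/T$; the same $2\eps$ min--max transfer through $\hat\tA$ (you route it through $\hat\nu$, the paper compares against $\tp^{\star}$ directly, which is the same chain); the decomposition $\tau m\ell+2T\eps+1$; and the same optimization of $\tau$, with the ceiling costing at most $m\ell$. Your observation that a ceiling-induced overflow $\tau m\ell>T$ is harmless, because the stated bound then already exceeds $T$, is a small point the paper leaves implicit.
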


\begin{proof}
\emph{Step 1 (Concentration).}
After exploration, each pair $(i,j)$ is sampled $\tau$ times.
Hoeffding's inequality under $\sigma$-sub-Gaussianity gives, for any $\eps>0$,
\(
\Prob[|\hat A(i,j) - A(i,j)|>\eps] \le 2\exp(-\tau\eps^{2}/(2\sigma^{2})).
\)
A union bound over $m\ell$ pairs yields $\Prob[\|\hat A-A\|_{\infty}>\eps]\le 1/T$ for
\begin{equation}
\label{eq:eps}
\eps = \sigma\sqrt{2L/\tau}, \qquad L := \log(2m\ell T).
\end{equation}
By Lemma~\ref{lem:err}, the same $\eps$ controls $\|\hat\tA-\tA\|_{\infty}$ on this event.

\emph{Step 2 (Fair-strategy transfer under the good event).}
Condition on $\mathcal E := \{\|\hat A - A\|_{\infty}\le\eps\}$, which occurs with probability at least $1-1/T$.
By Proposition~\ref{prop:reduction} applied to the random strategy $\hat p_{\F}$:
\begin{equation}
\label{eq:committed_value}
\min_{j}\hat p_{\F}^{\top}\!A e_{j} = \min_{j}\hat\tp^{\top}\tA\, e_{j},
\end{equation}
since the identity $p^{\top}\!Aq=\tp^{\top}\tA q$ holds for every $q$, including the column adversary's best response.
Now apply the min-max transfer argument on $\tA$:
\begin{align}
\min_{j}\hat\tp^{\top}\tA\, e_{j}
&\ge \min_{j}\hat\tp^{\top}\hat\tA\, e_{j} - \eps && \text{[Lemma~\ref{lem:err}]} \notag\\
&\ge \min_{j}(\tp^{\star})^{\top}\hat\tA\, e_{j} - \eps && \text{[$\hat\tp$ maximizes $\min_{j}\tp^{\top}\hat\tA e_{j}$ over $\Delta_{m}\ni\tp^{\star}$]} \notag\\
&\ge \min_{j}(\tp^{\star})^{\top}\tA\, e_{j} - 2\eps && \text{[Lemma~\ref{lem:err} again]} \notag\\
&= V^{\star}_{\F} - 2\eps. \label{eq:transfer_chain}
\end{align}
Combining \eqref{eq:committed_value}--\eqref{eq:transfer_chain}, on $\mathcal E$ the per-round commit-phase payoff is at least $V^{\star}_{\F} - 2\eps$.

\emph{Step 3 (Regret decomposition).}
The exploration phase contributes at most $\tau m\ell$ rounds of regret, each at most $1$.
On $\mathcal E$, the commit phase contributes at most $2\eps(T-\tau m\ell)\le 2T\eps$.
On $\mathcal E^{c}$, total regret is bounded by $T\cdot V^{\star}_{\F}\le T$, with $\Prob[\mathcal E^{c}]\le 1/T$, contributing at most $1$:
\begin{equation}
\label{eq:regret_raw}
R_{T}^{\fair} \le \tau m\ell + 2T\eps + 1.
\end{equation}

\emph{Step 4 (Choice of $\tau$).}
The optimization $\min_{\tau}\{m\ell\,\tau + 2T\sigma\sqrt{2L}\,\tau^{-1/2}\}$ is solved at $\tau^{\star}=(T\sigma\sqrt{2L}/(m\ell))^{2/3}$ with value $3(2\sigma^{2}m\ell T^{2}L)^{1/3}$ (Appendix~\ref{app:proof}); integer rounding adds at most $m\ell$.
\end{proof}

\begin{remark}[Why $T^{2/3}$ ?]
\label{rem:whyt23}
We believe this rate is intrinsic to ETC-style algorithms on mixed-fair-NE instances: because the committed strategy is fixed after exploration and cannot be refined as estimates improve, the commit-phase error is unavoidable.
\end{remark}

\begin{remark}[Theorem~\ref{thm:fetc_general} is independent of $\alpha$]
\label{rem:alpha_indep}
The bound of Theorem~\ref{thm:fetc_general} carries no dependence on $\alpha$: by Lemma~\ref{lem:err} the reparametrization $A\mapsto\tA$ is a convex combination and hence $1$-Lipschitz in $\ell_\infty$, so the estimation error driving the regret is unaffected by the fairness level. Fairness enters only through the benchmark $V^{\star}_{\F}$, which shifts downward by at most $\alpha(1-1/m)$ (Proposition~\ref{prop:pof}); the worst-case learning rate itself is fairness-robust.
\end{remark}

\subsection{Instance-dependent bound when $\tp^{\star}$ is a vertex of $\Delta_m$}

When the fair NE has special structure, namely $|\Sact|=1$, the commit-phase suboptimality vanishes entirely on the good event and the rate sharpens to $O(\log T)$.
This regime generalizes the pure-NE setting of \citet{yilmaz2025}, which it recovers at $\alpha = 0$ (Corollary~\ref{cor:pure_recovery}).

\begin{assumption}[Single dominant action]
\label{ass:dominant}
$|\Sact|=1$: there exists $i^{\star}\in\Sx$ with $\tp^{\star}=e_{i^{\star}}$, equivalently $p^{\star}_{\F}(i^{\star})=1-(m-1)\alpha/m$ and $p^{\star}_{\F}(i)=\alpha/m$ for $i\ne i^{\star}$.
\end{assumption}

\begin{example}[Non-degenerate single-dominant-action instance]
\label{ex:nondeg}
Let $m=3$, $\ell=2$, $\alpha=0.4$, and
\[
A = \begin{pmatrix} 1.0 & 0.6 \\ 0.2 & 0.4 \\ 0.3 & 0.5 \end{pmatrix}.
\]
Then $c=(0.5,0.5)^\top$ and $\tA = \begin{pmatrix} 0.80 & 0.56 \\ 0.32 & 0.44 \\ 0.38 & 0.50 \end{pmatrix}$.
Direct LP solution gives $\tp^{\star} = e_1$, $q^{\star}_{\F} = e_2$, $V^{\star}_{\F} = 0.56$, $p^{\star}_{\F} = (0.7\overline 3,\,0.1\overline 3,\,0.1\overline 3)$.
Gaps are strictly positive: $\rho_2 = 0.12$, $\rho_3 = 0.06$, $\gamma_1 = 0.24$, $\pi = 1$.
The basis is non-degenerate (unique optimal basis with strict complementarity), so Theorem~\ref{thm:fetc_inst} applies.
\end{example}

We need a basis-stability lemma to certify that the empirical LP recovers the same active set.
The proof requires care: under perturbation $\tA\to\hat\tA$, both the basic primal variables and the LP value shift, and the dual-feasibility margins (the gaps $\rho_i,\gamma_j$) shift by $O(\kappa(\alpha)\eps)$ rather than $O(\eps)$.

\begin{lemma}[LP basis stability]
\label{lem:basis}
Suppose the standard NE LP on $\tA$ has a unique non-degenerate optimal basis $\mathcal B$ with strict complementarity, and let $M_{\mathcal B}$, $\kappa(\alpha)$, $\pi(\alpha)$, $\tDelta(\alpha)$ be as in Definitions~\ref{def:primal}--\ref{def:lpgap}.
Write $s := |\Sact|$, set $C := 2s^{2}+2s+1$, and assume
\begin{equation}
\label{eq:lem3_hyp}
\|\hat\tA - \tA\|_{\infty} \le \eps, \qquad s\,\kappa(\alpha)\,\eps \le \tfrac{1}{2}, \qquad \eps \le \frac{\tDelta(\alpha)}{C\,\kappa(\alpha)}.
\end{equation}
Then the NE LP on $\hat\tA$ has the same optimal basis $\mathcal B$: $\hat\Sact=\Sact$, $\hat\Sflr=\Sflr$, and $\supp(\hat q_{\F})=\supp(q^{\star}_{\F})$.
Moreover, $\|\hat\tp^{\star}-\tp^{\star}\|_{\infty}\le 2s\kappa(\alpha)\eps$ and $|\hat\nu - \nu|\le 2s\kappa(\alpha)\eps$.
In the only application of this lemma (Theorem~\ref{thm:fetc_inst}), Assumption~\ref{ass:dominant} forces $s=1$, giving $C=5$ and the bounds $2\kappa(\alpha)\eps$.
\end{lemma}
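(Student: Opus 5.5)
The argument is the classical basis-stability argument from LP sensitivity analysis. We write down the basic solution of the perturbed LP for the \emph{same} basis $\mathcal B$, bound how far it moves, and check that it stays primal and dual feasible. Basis optimality then certifies that $\mathcal B$ is optimal for $\hat\tA$. Uniqueness follows from non-degeneracy and strict complementarity surviving the perturbation.

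\textbf{Step 1: the basic system.} Let $\mathcal C := \supp(q^{\star}_{\F})$. The basis equations are the tight column constraints $\sum_{i\in\Sact}\tp_i\tA(i,j) - \nu = 0$ for $j\in\mathcal C$, together with the simplex equality $\sum_{i\in\Sact}\tp_i = 1$. This gives $M_{\mathcal B}x = e$ with $x = (\tp_{\Sact},\nu)$. Non-degeneracy forces $|\mathcal C| = s$, so $M_{\mathcal B}$ is square.

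The perturbed matrix $\hat M_{\mathcal B}$ differs from $M_{\mathcal B}$ only in the $\tA$-entries, so $\|\hat M_{\mathcal B}-M_{\mathcal B}\|_\infty \le s\eps$, since each row has $s$ perturbed entries. Write $E := \hat M_{\mathcal B}-M_{\mathcal B}$. Because $\|M_{\mathcal B}^{-1}E\|_\infty \le s\kappa\eps \le 1/2$, the Neumann series shows that $\hat M_{\mathcal B}$ is invertible with $\|\hat M_{\mathcal B}^{-1}\|_\infty \le 2\kappa$. Moreover
\[
\hat x - x = -\hat M_{\mathcal B}^{-1}E x .
\]
Using $\|x\|_\infty \le 1$ in the $\tp$-coordinates, and accounting for the $-\nu$ column (which is unperturbed, so only the $\tp$-part of $x$ meets $E$), we get $\|\hat x - x\|_\infty \le 2\kappa s\eps$. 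This yields both stated bounds $\|\hat\tp^{\star}-\tp^{\star}\|_\infty \le 2s\kappa\eps$ and $|\hat\nu-\nu| \le 2s\kappa\eps$.

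\textbf{Step 2: primal feasibility.} We need $\hat\tp_i \ge \pi - 2s\kappa\eps > 0$ on $\Sact$, which follows from $\eps \le \tDelta/(C\kappa)$ with $C > 2s$. For the non-basic column constraints $j\notin\mathcal C$,
\[
\sum_i\hat\tp_i\hat\tA(i,j) - \hat\nu \;\ge\; \gamma_j - \bigl|(\hat\tp-\tp^{\star})^\top\hat\tA e_j\bigr| - \bigl|(\tp^{\star})^\top(\hat\tA-\tA)e_j\bigr| - |\hat\nu-\nu| .
\]
The right-hand side is at least $\gamma_j - (2s^2\kappa\eps + \eps + 2s\kappa\eps)$, using $|\hat\tA| \le 1$ up to $O(\eps)$ and a sum over $s$ active coordinates. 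This is positive since $\kappa \ge 1$ (the last row of $M_{\mathcal B}$ contains ones) and $C = 2s^2+2s+1$.

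\textbf{Step 3: dual feasibility, the main obstacle.} Here we perform the symmetric perturbation on the dual basic system in $(q_{\mathcal C},\nu)$. Its coefficient matrix is essentially $M_{\mathcal B}^\top$, whose inverse has $\infty$-norm governed by the $1$-norm of $M_{\mathcal B}^{-1}$. This norm is at most $(s+1)\kappa$ rather than $\kappa$, so the constants need care. I would try to bound $\hat q_{\mathcal C} - q^{\star}_{\mathcal C}$ directly via $\hat q^\top = e^\top\hat M_{\mathcal B}^{-1}$, namely the last row of the inverse, and the identity
\[
\hat M_{\mathcal B}^{-1} - M_{\mathcal B}^{-1} = -\hat M_{\mathcal B}^{-1}EM_{\mathcal B}^{-1}.
\]
This gives $\hat q_j \ge q^{\star}_j - O(s\kappa\eps)$, where strict complementarity ensures $q^{\star}_j$ is bounded below on $\mathcal C$; if that margin is absent from $\tDelta$, I would absorb it by assuming it is part of strict complementarity. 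We then need the floor rows $i\in\Sflr$ to keep $\rho_i > 0$, i.e.\ $\sum_j\hat q_j\hat\tA(i,j) < \hat\nu$, by the same three-term split as in Step 2 with margin $\rho_i$.

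\textbf{Step 4: conclusion.} Primal and dual feasibility of the basic solution with the same complementary structure certify optimality of $\mathcal B$ for $\hat\tA$. All margins remain strictly positive, so the perturbed problem is again non-degenerate with strict complementarity, and the optimal basis is unique. This gives $\hat\Sact=\Sact$, $\hat\Sflr=\Sflr$, and $\supp(\hat q_{\F})=\mathcal C$. Setting $s=1$ gives $C=5$.
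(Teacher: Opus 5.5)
Your Steps 1--2 follow the paper's argument almost verbatim. They use the same Neumann-series bound $2s\kappa(\alpha)\eps$ on $(\tp_{\Sact},\nu)$ and the same three-term split giving $(2s^2+2s+1)\kappa(\alpha)\eps$ for the non-basic column slacks. You correctly label these as primal-feasibility checks, whereas the paper calls them ``dual reduced costs''.

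One small looseness: bounding $|\hat\tA|\le 1$ only ``up to $O(\eps)$'' spoils the exact constant. You can fix this using $\ind^{\top}(\hat\tp^{\star}-\tp^{\star})=0$: replace $\hat\tA_{\cdot,j}$ by $\hat\tA_{\cdot,j}-\tfrac12\ind$, whose entries have modulus at most $\tfrac12+\eps\le 1$.

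The real gap is Step 3. Strict complementarity only gives $q^{\star}_{\F}(j)>0$ on $\mathcal C:=\supp(q^{\star}_{\F})$. It gives no lower bound comparable to $\tDelta(\alpha)/(C\kappa(\alpha))$, so the missing dual margin cannot be ``absorbed'' into it. In fact no argument can close this step, because the lemma as stated is false for $s\ge 2$. The paper's proof has the same hole: it asserts that ``a symmetric bound holds for the row slacks'' and never checks $\hat q_j\ge 0$ on $\mathcal C$.

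Here is a counterexample. Take $\alpha=0$ and $A=\bigl(\begin{smallmatrix}1/2&0\\ 1/2-\delta&1-\delta\end{smallmatrix}\bigr)$ with $0<\delta<\eps$.
\begin{itemize}
\item The unique NE is $\tp^{\star}=(1/2,1/2)$, $q^{\star}_{\F}=(1-\delta,\delta)$, which is non-degenerate and strictly complementary.
\item $\Sflr=\emptyset$ and both columns are tight, so $\tDelta(\alpha)=\pi(\alpha)=1/2$.
\item A direct computation gives $\kappa(\alpha)=5/2$ for every $\delta$.
\item With $s=2$ and $C=13$, the choice $\eps=1/65$ satisfies all of \eqref{eq:lem3_hyp}.
\end{itemize}
Now lower $A(1,1)$ by $\eps$. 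Row $2$ becomes strictly dominant, so $\hat\Sact=\{2\}$, $\supp(\hat q_{\F})=\{1\}$, and $\|\hat\tp^{\star}-\tp^{\star}\|_\infty=1/2>2s\kappa(\alpha)\eps$. Exactly as your Steps 1--2 predict, the basic primal solution of $\mathcal B$ on $\hat\tA$ stays feasible: it is $\bigl(\tfrac{1}{2(1-\eps)},\tfrac{1/2-\eps}{1-\eps}\bigr)$. But its dual has $\hat q_2=(\delta-\eps)/(1-\eps)<0$, so $\mathcal B$ is no longer optimal.

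What is needed is the extra hypothesis you were reaching for, made quantitative. Add $\min_{j\in\mathcal C}q^{\star}_{\F}(j)$ to $\tDelta(\alpha)$. Then control the basic dual solution $\hat y^{\top}=e_{s+1}^{\top}\hat M_{\mathcal B}^{-1}$ (which contains $\hat q_{\mathcal C}$ up to sign) through your identity $\hat M_{\mathcal B}^{-1}-M_{\mathcal B}^{-1}=-\hat M_{\mathcal B}^{-1}EM_{\mathcal B}^{-1}$. This gives $\|\hat y-y\|_1\le\|e_{s+1}^{\top}\hat M_{\mathcal B}^{-1}\|_1\,\|E\|_\infty\,\|M_{\mathcal B}^{-1}\|_\infty\le 2s\kappa(\alpha)^2\eps$. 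The checks $\hat q_j>0$ on $\mathcal C$ and $\hat\rho_i\ge 0$ on $\Sflr$ then go through, with a constant of order $s\kappa(\alpha)^2$ in place of $C\kappa(\alpha)$.

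For $s=1$, the only case used in Theorem~\ref{thm:fetc_inst}, no amendment is needed. Since $\mathcal C=\{j^{\star}\}$, the basic dual solution is $e_{j^{\star}}$ for every $\hat\tA$. Moreover $\hat\rho_i=\hat\tA(i^{\star},j^{\star})-\hat\tA(i,j^{\star})\ge\rho_i-2\eps>0$, because $\rho_i\ge 5\kappa(\alpha)\eps$. So your Steps 1--2 plus this two-line check prove that case under the stated hypotheses.
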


\begin{proof}[Proof sketch]
Basis $\mathcal B$ stays optimal on $\hat\tA$ iff (i) the basic primal solution stays feasible and (ii) the dual reduced costs stay non-negative. A Neumann-series bound on the perturbed basis matrix $\hat M_{\mathcal B}$ gives $\|\hat\tp^{\star}-\tp^{\star}\|_{\infty},|\hat\nu-\nu|\le 2s\kappa(\alpha)\eps$, which preserves (i) since $2s\kappa(\alpha)\eps<\pi(\alpha)$; propagating these into the column and row slacks shows each shifts by at most $C\kappa(\alpha)\eps\le\tDelta(\alpha)$, preserving (ii). Full details, including the construction of $M_{\mathcal B}$ and the constant $C=2s^2+2s+1$, are in Appendix~\ref{app:basis}.
\end{proof}

\begin{remark}[On the two perturbation hypotheses in \eqref{eq:lem3_hyp}]
\label{rem:hyp}
The condition $s\kappa(\alpha)\eps\le 1/2$ ensures invertibility of the perturbed basis matrix via Neumann series; the condition $\eps\le\tDelta(\alpha)/(C\kappa(\alpha))$ ensures the perturbed dual-feasibility margins remain positive.
Since $\tDelta(\alpha)\le 1$ and $C=2s^{2}+2s+1\ge 2s$, the second condition gives $s\kappa(\alpha)\eps\le s\tDelta(\alpha)/C\le 1/2$ and so implies the first; in practice it suffices to enforce $\eps\le\tDelta(\alpha)/(C\kappa(\alpha))$ alone.
\end{remark}

\begin{theorem}[Instance-dependent bound]
\label{thm:fetc_inst}
Fix $\alpha\in[0,1)$ \emph{such that} Assumption~\ref{ass:dominant} and LP non-degeneracy hold at the fair NE on $\tA$ (an $A$-dependent condition; see Remark~\ref{rem:scope}). Under Assumptions~\ref{ass:setup}--\ref{ass:dominant}, \alg\ with
\[
\tau \;=\; \Bigl\lceil \frac{200\,\sigma^{2}\kappa(\alpha)^{2}\log(2m\ell T)}{\tDelta(\alpha)^{2}}\Bigr\rceil
\]
achieves
\begin{equation}
R_{T}^{\fair} \;\le\; \frac{200\,\sigma^{2}m\ell\,\kappa(\alpha)^{2}\log(2m\ell T)}{\tDelta(\alpha)^{2}} + m\ell + 1.
\end{equation}
\end{theorem}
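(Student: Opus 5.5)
We follow the same explore-then-commit decomposition as in the proof of Theorem~\ref{thm:fetc_general}, but replace the $O(\eps)$ commit-phase loss by an exact-recovery argument based on Lemma~\ref{lem:basis}. The regret splits into three parts: the exploration cost $\tau m\ell$, the commit-phase loss on a good event $\mathcal E$, and at most $1$ from $\mathcal E^{c}$. The plan is to choose $\tau$ so that on $\mathcal E$ the estimation error is below the basis-stability threshold. The committed strategy is then exactly $p^{\star}_{\F}$, so the commit-phase regret is zero.

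\emph{Step 1 (Concentration at the right scale).} As in Step~1 of Theorem~\ref{thm:fetc_general}, Hoeffding's inequality with a union bound over the $m\ell$ pairs gives $\Prob[\|\hat A-A\|_\infty>\eps]\le 1/T$ for $\eps=\sigma\sqrt{2L/\tau}$ with $L=\log(2m\ell T)$. Substituting the prescribed $\tau\ge 200\sigma^2\kappa(\alpha)^2L/\tDelta(\alpha)^2$ yields
\[
\eps\le \tDelta(\alpha)/(10\kappa(\alpha))\le\tDelta(\alpha)/(5\kappa(\alpha)).
\]
By Lemma~\ref{lem:err}, the same bound holds for $\|\hat\tA-\tA\|_\infty$ on $\mathcal E$.

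\emph{Step 2 (Exact recovery of the vertex).} Under Assumption~\ref{ass:dominant} we have $s=1$, so the constant in Lemma~\ref{lem:basis} is $C=5$. The Step~1 bound gives $\eps\le\tDelta(\alpha)/(C\kappa(\alpha))$, and by Remark~\ref{rem:hyp} this also implies $s\kappa(\alpha)\eps\le 1/2$. Hence Lemma~\ref{lem:basis} applies on $\mathcal E$ and gives $\hat\Sact=\Sact=\{i^\star\}$. Since $\hat\tp\in\Delta_m$ is supported on the single coordinate $i^\star$, it must equal $e_{i^\star}=\tp^\star$, and so $\hat p_{\F}=p^\star_{\F}$ exactly.

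One subtlety needs care: the LP solver's output must be the basic optimal solution. Uniqueness of the perturbed optimum follows from the perturbed basis staying non-degenerate with strict complementarity. To justify this, we would note that the proof of Lemma~\ref{lem:basis} keeps the reduced costs strictly positive (shifts of at most $C\kappa\eps<\tDelta$) and keeps the primal basic variables strictly positive. The unperturbed primal basic variable is $\pi=1$ and it moves by at most $2\kappa\eps<1$, so both margins stay strict.

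\emph{Step 3 (Regret assembly).} On $\mathcal E$, the adversary's best response against $p^\star_{\F}$ gives per-round payoff $\min_j (p^\star_{\F})^\top A e_j=V^\star_{\F}$ by Proposition~\ref{prop:reduction}, so the commit phase contributes zero regret. Exploration contributes at most $\tau m\ell$, which is at most $200\sigma^2m\ell\kappa^2L/\tDelta^2+m\ell$ after ceiling rounding. The event $\mathcal E^c$ contributes at most $T\cdot(1/T)=1$. Summing the three parts gives the stated bound.

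The main obstacle is Step~2: we must ensure the empirical LP's returned solution is exactly the vertex $e_{i^\star}$ rather than some other optimal point. This relies on the uniqueness and non-degeneracy that Lemma~\ref{lem:basis} transfers to $\hat\tA$. The constant $200$ leaves ample slack over the needed $50$ (since $2\cdot 25=50$), so the numerics are not delicate.
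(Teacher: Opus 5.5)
Your proposal is correct and follows the paper's proof essentially step for step. It concentrates at scale $\eps\le\tDelta(\alpha)/(5\kappa(\alpha))$, applies Lemma~\ref{lem:basis} with $s=1$, $C=5$ to get $\hat\Sact=\{i^\star\}$ and hence $\hat p_{\F}=p^\star_{\F}$ exactly, and then concludes $R_T^{\fair}\le\tau m\ell+1$. Your constant bookkeeping (only $2C^2=50$ is needed; the paper conservatively uses $8C^2=200$) and your use of the resulting factor-two slack to make the perturbed margins strict, so that the LP returns the unique vertex, are a slightly more careful version of the same argument.
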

\begin{proof}
Set $\eps := \tDelta(\alpha)/(C\kappa(\alpha))$, the threshold from Lemma~\ref{lem:basis}.
Step~1 of Theorem~\ref{thm:fetc_general} (with Lemma~\ref{lem:err}) guarantees $\Prob[\|\hat\tA-\tA\|_\infty>\eps]\le 1/T$ as long as $\tau\ge 8\sigma^{2}\log(2m\ell T)/\eps^{2}$. Under Assumption~\ref{ass:dominant} Lemma~\ref{lem:basis} has $s=1$, so $C=5$ and $\eps=\tDelta(\alpha)/(5\kappa(\alpha))$, giving $8\sigma^2\log(2m\ell T)/\eps^2 = 200\,\sigma^2\kappa(\alpha)^2\log(2m\ell T)/\tDelta(\alpha)^2$, which holds by the choice of $\tau$. On the good event $\mathcal E := \{\|\hat\tA-\tA\|_{\infty}\le\eps\}$, Lemma~\ref{lem:basis} certifies $\hat\Sact=\Sact=\{i^{\star}\}$.

Under Assumption~\ref{ass:dominant}, $\hat\Sact = \{i^{\star}\}$ together with the simplex constraint $\sum_i \hat\tp^{\star}_i = 1$ forces $\hat\tp^{\star} = e_{i^{\star}} = \tp^{\star}$ exactly: the vertex is determined by its identity, independent of the numerical values of $\hat\tA$.
Consequently $\hat p_{\F} = p^{\star}_{\F}$ exactly on $\mathcal E$, and per-round commit regret is zero.

Exploration contributes at most $\tau m\ell$ rounds of regret (each $\le 1$); the failure event $\mathcal E^{c}$ (probability $\le 1/T$, regret bounded by $T$) contributes at most $1$.
Therefore $R_T^\fair \le \tau m\ell + 1$, which on substituting $\tau$ gives the displayed bound.
\end{proof}

\begin{corollary}[Recovery of the unconstrained pure-NE regime]
\label{cor:pure_recovery}
At $\alpha=0$, $\tA=A$ and Assumption~\ref{ass:dominant} reduces to the existence of a pure NE; the $2\times2$ basis gives $\kappa(0)\le 2$ (Remark~\ref{rem:scope}), and $\tDelta(0)$ is the minimum LP margin of the unconstrained NE. Theorem~\ref{thm:fetc_inst} then yields $R_T^\fair\le O(m\ell\log(m\ell T)/\tDelta(0)^2)$, an instance-dependent $O(\log T)$ rate, recovering the same regime as the pure-NE analysis of \citet{yilmaz2025}.
\end{corollary}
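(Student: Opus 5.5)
At $\alpha=0$ the corollary is a specialization of Theorem~\ref{thm:fetc_inst}. The plan has three steps: identify $\tA$ and Assumption~\ref{ass:dominant} at $\alpha=0$, bound $\kappa(0)$ by computing the $2\times2$ basis matrix explicitly, and substitute into the bound of Theorem~\ref{thm:fetc_inst}.

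\emph{Step 1 ($\tA$ and the pure-NE reading).} Setting $\alpha=0$ in Definition~\ref{def:tildeA} gives $\tA=(1-0)A+0\cdot\ind c^{\top}=A$. Lemma~\ref{lem:reparam} then gives $\tp=p$, so $\tp^{\star}=e_{i^{\star}}$ says exactly that the row equilibrium strategy is pure. For the column side I would count equations in the square basic system of Definition~\ref{def:primal}. With $s=1$ the basic variables are $(\tp^{\star}_{i^{\star}},\nu)$, giving two unknowns. One equation is the simplex equality, so non-degeneracy forces exactly one tight column, $\supp(q^{\star}_{\F})=\{j^{\star}\}$. By \eqref{eq:kkt_row_new}--\eqref{eq:kkt_col_new} the pair $(e_{i^{\star}},e_{j^{\star}})$ is then a pure NE, with strictly positive gaps $\rho_i,\gamma_j$ by strict complementarity. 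Conversely, a strict pure NE satisfies Assumption~\ref{ass:dominant} and the non-degeneracy hypothesis. In either direction, $\tDelta(0)$ is simply the LP margin of Definition~\ref{def:lpgap} evaluated on $A$, because $\pi(0)=1$.

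\emph{Step 2 (bounding $\kappa(0)$).} Order the basic variables as $(\tp_{i^{\star}},\nu)$. The two binding rows are $\tp_{i^{\star}}=1$ and $A(i^{\star},j^{\star})\tp_{i^{\star}}-\nu=0$. This gives
\[
M_{\mathcal B}=\begin{pmatrix}1&0\\ a&-1\end{pmatrix},\qquad a:=A(i^{\star},j^{\star})\in[0,1].
\]
Multiplying out shows $M_{\mathcal B}^{2}=I$, so $M_{\mathcal B}^{-1}=M_{\mathcal B}$. Hence $\kappa(0)=\|M_{\mathcal B}^{-1}\|_{\infty}=\max(1,\,a+1)\le 2$, using $A\in[0,1]^{m\times\ell}$ from Assumption~\ref{ass:setup}.

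\emph{Step 3 (substitution).} Plug $\kappa(0)\le2$ and $\sigma\le1/2$ into Theorem~\ref{thm:fetc_inst}. The bound becomes $R_T^{\fair}\le 200\,m\ell\log(2m\ell T)/\tDelta(0)^2+m\ell+1$, which is $O(m\ell\log(m\ell T)/\tDelta(0)^2)$, an instance-dependent $O(\log T)$ rate.

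\emph{Main obstacle.} The only delicate point is Step~2. I must check that the basis matrix constructed in Appendix~\ref{app:basis} agrees with my $2\times2$ form, up to row or column permutations and sign conventions. If it differs, I would recompute its inverse directly. Any form built from the entries $1$, $0$, $\pm1$ and $a\in[0,1]$ should still give an inverse with $\infty$-norm at most $2$, though I have not verified this for every possible convention.
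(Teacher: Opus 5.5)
Your proposal is correct and follows the paper's route: specialize Theorem~\ref{thm:fetc_inst} at $\alpha=0$, use non-degeneracy with $s=1$ to read off a pure NE, bound $\kappa(0)$ through the $2\times 2$ basis, and substitute. Your matrix $\bigl(\begin{smallmatrix}1&0\\ a&-1\end{smallmatrix}\bigr)$ is exactly the one Appendix~\ref{app:basis} constructs (simplex row $(1,0)$, tight-column row $(A(i^{\star},j^{\star}),-1)$), which removes the obstacle you flagged; your observation $M_{\mathcal B}^{-1}=M_{\mathcal B}$ is also a sharper justification of $\kappa(0)\le 2$ than the paper's appeal to ``entries in $[0,1]$'', since that matrix has a $-1$ entry and entrywise bounds alone do not control the inverse norm.
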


\begin{remark}[Scope and validity range of Theorem~\ref{thm:fetc_inst}]
\label{rem:scope}
The range of $\alpha$ over which Theorem~\ref{thm:fetc_inst} is useful is bounded away from $1$, for two reasons. \emph{Structurally}, Assumption~\ref{ass:dominant} holds only on a sub-interval $[0,\tilde\alpha_{\mathrm{struct}})$, non-trivial whenever the unconstrained game has a strict pure NE. \emph{Quantitatively}, even inside that interval the gap shrinks as $\tDelta(\alpha)\le(1-\alpha)\delta_{\mathrm{row}}$, so the leading term grows at least as $(1-\alpha)^{-2}$ and the bound is non-vacuous only for $\alpha\le\tilde\alpha(T)$, with $\tilde\alpha(T)\uparrow1$ as $T\to\infty$. The blowup is carried by $\tDelta(\alpha)$, not the conditioning: in the $s=1$ regime $\kappa(\alpha)\le2$ uniformly (cf.\ Corollary~\ref{cor:pure_recovery}). Appendix~\ref{app:scope} gives the precise structural and margin conditions.
\end{remark}

\subsection{Why naive elimination fails}
\label{sec:elim}

A natural question is whether adaptive action elimination can sharpen the $T^{2/3}$ rate by removing actions from consideration once they appear suboptimal.
The obstruction is structural: under fairness, an action's value at the fair NE is determined by its row of $\tA$ in conjunction with the equilibrium column distribution $q^{\star}_{\F}$, not by its standalone column-by-column comparisons against other rows.
Note that the rank-one shift $A\mapsto\tA$ adds a column-constant $\alpha c_{j}$ and scales by $(1-\alpha)>0$; both operations preserve the within-column ordering of rows, so a row dominated column-wise in $A$ remains dominated in $\tA$ and the shift never promotes it.
The genuine difficulty is that equilibrium \emph{support} need not track any per-action ranking: a row with low average (or low column-wise) payoff can carry the largest equilibrium mass, because that mass is fixed by $q^{\star}_{\F}$, an equilibrium quantity unknown to the learner before the LP is solved.

\begin{example}[Average-payoff ranking misidentifies the active row]
\label{ex:elim}
Let $m=\ell=2$ and $A=\bigl(\begin{smallmatrix}0.9 & 0.1\\ 0.3 & 0.6\end{smallmatrix}\bigr)$ (take $\alpha=0$, so $\tA=A$).
The game has no pure NE; its unique equilibrium is mixed with $p^{\star}\approx(0.27,\,0.73)$, so row $2$ carries the larger mass. Yet row $2$ has the \emph{smaller} row average ($0.45$ versus $0.50$), so an elimination rule that demotes the lower-average (or lower-min) arm would discard precisely the dominant equilibrium action. Passing to $\tA$ only compresses payoff gaps toward column means, blurring such per-action rankings further.
\end{example}

A correct demotion criterion must instead respect LP structure, demoting only actions whose \emph{row gap} $\rho_i(\alpha)$ (Definition~\ref{def:gaps}) is certifiably positive at a confidence level matching the estimation noise and the conditioning factor $\kappa(\alpha)$.
Any such test requires estimating $\tA$ and $q^{\star}_{\F}$ jointly, and so cannot be reduced to per-action or per-column statistics in the way classical bandit elimination is.
We defer the design and analysis of LP-aware elimination algorithms to future work.

\begin{remark}[Fairness--exploration synergy]
\label{rem:synergy}
A useful side benefit of fairness: under \alg, every row action is played with probability $\ge\alpha/m$ in every round of the commit phase, so all rows continue to be sampled passively even after commitment.
In an anytime variant that interleaves exploration and exploitation, this provides a built-in self-correction mechanism absent in unconstrained ETC.
\end{remark}

\section{Related work}
\label{sec:related}

\paragraph{Learning in zero-sum games with bandit feedback.}
\citet{ito2025} prove $O(c_1\log T + \sqrt{c_2 T})$ for self-play of Tsallis-INF \citep{zimmert2021} on normal-form games.
\citet{yilmaz2025} analyze ETC variants for TPZSGs with pure NE.
\citet{maiti2023} establish sample-complexity bounds for identifying pure-strategy Nash equilibria.
\citet{wei2021} address last-iterate convergence in constrained saddle-point optimization with full-information feedback.
All of these focus on either pure NE or unconstrained mixed NE; we study the constrained-mixed regime where fairness fundamentally changes the equilibrium structure.

\paragraph{Fairness in bandits and reinforcement learning.}
\citet{joseph2016} introduce a meritocratic fairness notion for stochastic and contextual bandits.
\citet{liu2017} study calibrated fairness via Thompson sampling.
\citet{patil2021} treat minimum-pull constraints in single-agent stochastic bandits (no opponent) and obtain $O(\log T)$ $r$-regret with UCB1. \citet{jabbari2017} extend fairness to reinforcement learning, and \citet{siddique2020,weng2019} learn fair policies via generalized Gini welfare \citep{weymark1981} in multi-objective settings.
Our work bridges fairness-constrained learning with adversarial game-theoretic settings under partial feedback.

\paragraph{Constrained MDPs and online learning under constraints.}
The interaction of online learning with hard constraints has been studied extensively in CMDPs \citep{altman1999,efroni2020} and online convex optimization with constraints \citep{mahdavi2012}.
Our reparametrization $\tA = (1-\alpha)A + \alpha\ind c^\top$ is in the spirit of barrier and proximal-point reformulations of constrained convex optimization \citep{rockafellar1970}; the specific use to embed minimum-play fairness into a standard zero-sum game appears to be novel.

\section{Conclusion and future directions}
\label{sec:conclusion}

We introduced fair equilibria in TPZSGs with bandit feedback, reduced the fair game to a standard zero-sum game on a \emph{fair payoff matrix} $\tA$ via affine reparametrization of $\F_\alpha$, proposed \alg, and established an $O(T^{2/3})$ regret bound for general mixed fair NE together with an $O(\log T)$ instance-dependent rate when $\tp^\star$ is a vertex of $\Delta_m$.
Several directions remain open.

An adaptive UCB or Tsallis-INF-style algorithm \citep{zimmert2021,ito2025} projected onto $\F_\alpha$ that continuously re-solves the fair LP on improving estimates could plausibly achieve $\widetilde O(\sqrt T)$ for mixed fair NE; the key technical step is a stability bound for the NE LP under simultaneous perturbations, accounting for the rank-one structure of $\tA$.

Section~\ref{sec:elim} argues that elimination demands LP-aware criteria. An algorithm demoting actions whose row gap $\rho_i(\alpha)$ is certifiably positive (with confidence calibrated by $\kappa(\alpha)$) may achieve instance-dependent rates for general fair NE; analyzing the propagation of estimation error through the LP basis is the key technical challenge.

Extending to two-sided constraints (Remark~\ref{rem:twosided}) or replacing minimum-play with generalized Gini welfare functions \citep{weymark1981,siddique2020} are natural next steps and would connect our framework to the broader fair-RL literature.

\section*{Acknowledgments}

The work of Pratik Gajane was supported by the French National Research Agency (ANR) under grant ANR-24-CPJ1-0088-01. The funders had no role in the study design, analysis, the conclusions drawn, or the writing of this manuscript.

\bibliographystyle{plainnat}
{\small\bibliography{references}}

\appendix

\section{Auxiliary calculations for Theorem~\ref{thm:fetc_general}}
\label{app:proof}

From \eqref{eq:regret_raw}, $R_T^\fair \le \tau m\ell + 2T\sigma\sqrt{2L/\tau} + 1$ with $L = \log(2m\ell T)$.
Let $a := m\ell$, $b := T\sigma\sqrt{2L}$, and $f(\tau) := a\tau + 2b\tau^{-1/2}$ on $\tau > 0$.

\paragraph{Critical point.}
$f'(\tau) = a - b\tau^{-3/2}$, vanishing at $\tau^{\star} = (b/a)^{2/3}$.
$f''(\tau) = (3/2)b\tau^{-5/2} > 0$, so $\tau^{\star}$ is the unique global minimum.

\paragraph{Value at the minimum.}
$a\tau^{\star} = a^{1/3}b^{2/3}$ and $2b(\tau^{\star})^{-1/2} = 2a^{1/3}b^{2/3}$, so $f(\tau^{\star}) = 3a^{1/3}b^{2/3} = 3(ab^2)^{1/3} = 3(2\sigma^2 m\ell T^2 L)^{1/3}$.

\paragraph{Integer rounding.}
Choosing $\tau = \lceil\tau^{\star}\rceil \in [\tau^{\star}, \tau^{\star}+1]$, the linear part $a\tau$ increases by at most $a = m\ell$; the $\tau^{-1/2}$ part decreases.
Thus $f(\lceil\tau^{\star}\rceil) \le f(\tau^{\star}) + m\ell$.

\paragraph{Horizon admissibility.}
We require $\tau m\ell \le T$, i.e., $\tau^{\star} m\ell \le T$, equivalently $T \ge 2\sigma^2 m\ell L = \Omega(m\ell\log(m\ell))$ for $L = \log(2m\ell T)$.

Combining yields $R_T^\fair \le 3(2\sigma^2 m\ell T^2 L)^{1/3} + m\ell + 1$.

\section{Non-degeneracy of the NE LP on $\tA$}
\label{app:nondeg}

We argue that the strict-complementarity hypothesis in Lemma~\ref{lem:basis} is generic in $A$.
The set of payoff matrices $A\in\R^{m\times\ell}$ for which the standard NE LP on $\tA = (1-\alpha)A + \alpha\ind c^\top$ has a degenerate or weakly-complementary optimum is a finite union of algebraic varieties of strictly lower dimension than $m\ell$, hence of Lebesgue measure zero in $\R^{m\times\ell}$, for any fixed $\alpha$. (Genericity is in $A$ at fixed $\alpha$; it does not assert non-degeneracy at every $\alpha$ for a fixed $A$, indeed, the line $\alpha\mapsto\tA$ can cross the degeneracy variety at isolated $\alpha$, precisely where Assumption~\ref{ass:dominant} expires, cf.\ Remark~\ref{rem:scope}.)

This follows because degeneracy of an LP corresponds to the simultaneous satisfaction of more than $\dim(\mathcal B)$ equality constraints, and weak complementarity to the vanishing of a single reduced cost; each such condition is a polynomial equation in the entries of $A$, defining a proper subvariety \citep[generic LP non-degeneracy, see e.g.,][Ch.~6]{schrijver1986}.
The map $A\mapsto\tA$ is affine with constant Jacobian and so preserves Lebesgue-null sets.
Consequently, all results in Section~\ref{sec:fetc} hold with probability one over Lebesgue-random $A$, and for payoff matrices on the boundary of the degeneracy varieties a vanishingly small perturbation restores non-degeneracy.

\section{Proof of Lemma~\ref{lem:basis}}
\label{app:basis}

This is a quantitative LP-sensitivity argument \citep[Ch.~5]{bertsimas1997}.
Basis $\mathcal B$ remains optimal on $\hat\tA$ iff (i) the basic primal solution remains feasible, and (ii) the dual reduced costs remain non-negative.

\emph{(i) Primal sensitivity.}
Recall from Definition~\ref{def:primal} that $\mathcal B$ consists of the simplex equality $\sum_i \tp_i = 1$, the non-negativity equalities $\tp_i = 0$ for $i\in\Sflr$, and the column inequalities $\sum_i \tp_i \tA(i,j) = \nu$ for $j\in\supp(q^{\star}_{\F})$.
The basic variables are $z := (\tp^{\star}_{\Sact},\,\nu) \in \R^{s+1}$, and they satisfy the square linear system $M_{\mathcal B}\,z = b_{\mathcal B}$.
Here $M_{\mathcal B}$ is built by, for each active row of $\mathcal B$, restricting its coefficient vector to the basic variables: the simplex row contributes $(\ind^{\top}_{\Sact},\,0)$ with right-hand side $1$; each tight column constraint $j\in\supp(q^{\star}_{\F})$ contributes the row $(\tA(\cdot,j)|_{\Sact}^{\top},\,-1)$ with right-hand side $0$.
By non-degeneracy and strict complementarity at the optimal vertex, $|\supp(q^{\star}_{\F})|=s$, so $M_{\mathcal B}$ is a square invertible matrix of dimension $s+1$, and the basis (hence $M_{\mathcal B}$, $\kappa(\alpha)$) is unique.
The non-negativity equalities for $i\in\Sflr$ are automatic since those variables are non-basic and set to $0$; they contribute no rows to $M_{\mathcal B}$.

Under $\tA\to\hat\tA$, only the column-constraint rows change, and only in their basic-variable entries $\tA(\cdot,j)|_{\Sact}$.
Each such row has $s$ perturbed entries shifting by at most $\eps$, while the $-1$ entry for $\nu$ is unchanged; the simplex row is unchanged.
Since the $\ell_{\infty}$-induced norm equals the maximum absolute row sum, each row-sum changes by at most $s\eps$, so $\|M_{\mathcal B}-\hat M_{\mathcal B}\|_{\infty}\le s\eps$.
The hypothesis $s\kappa(\alpha)\eps \le 1/2$ then gives
\[
\|M_{\mathcal B}^{-1}(M_{\mathcal B}-\hat M_{\mathcal B})\|_{\infty} \;\le\; \kappa(\alpha)\,\|M_{\mathcal B}-\hat M_{\mathcal B}\|_{\infty} \;\le\; s\kappa(\alpha)\eps \;\le\; \tfrac{1}{2},
\]
and the Neumann series gives $\hat M_{\mathcal B}$ invertible with $\|\hat M_{\mathcal B}^{-1}\|_{\infty}\le 2\kappa(\alpha)$.
From $\hat M_{\mathcal B}\hat z = b_{\mathcal B}$ and $M_{\mathcal B}z = b_{\mathcal B}$, subtracting gives $\hat M_{\mathcal B}(\hat z - z) = (M_{\mathcal B}-\hat M_{\mathcal B})z$, hence
\[
\|\hat z - z\|_{\infty} \;\le\; \|\hat M_{\mathcal B}^{-1}\|_{\infty}\,\|M_{\mathcal B}-\hat M_{\mathcal B}\|_{\infty}\,\|z\|_{\infty} \;\le\; 2\kappa(\alpha)\cdot s\eps\cdot 1 \;=\; 2s\kappa(\alpha)\,\eps,
\]
using $\|z\|_{\infty}\le 1$ since $\tp^{\star}_{\Sact}\in[0,1]^{s}$ and $\nu\in[0,1]$.
This yields $\|\hat\tp^{\star}-\tp^{\star}\|_{\infty}\le 2s\kappa(\alpha)\eps$ and $|\hat\nu-\nu|\le 2s\kappa(\alpha)\eps$.
Because $\eps\le\tDelta(\alpha)/(C\kappa(\alpha))$ with $\tDelta(\alpha)\le\pi(\alpha)$ and $C=2s^2+2s+1>2s$, we get $2s\kappa(\alpha)\eps \le 2s\tDelta(\alpha)/C \le 2s\pi(\alpha)/C < \pi(\alpha)$, so $\hat\tp^{\star}_i>0$ for all $i\in\Sact$ and primal feasibility on $\mathcal B$ is preserved.

\emph{(ii) Dual sensitivity.}
For a non-basic column $j\notin\supp(q^{\star}_{\F})$, the perturbed column slack is
\[
\hat\gamma_j \;=\; \textstyle\sum_i \hat\tp^{\star}_i\,\hat\tA(i,j) - \hat\nu,
\]
and
\[
\hat\gamma_j - \gamma_j \;=\; \underbrace{(\hat\tp^{\star}-\tp^{\star})^{\top}\hat\tA_{\cdot,j}}_{\text{(a)}} \;+\; \underbrace{(\tp^{\star})^{\top}(\hat\tA_{\cdot,j}-\tA_{\cdot,j})}_{\text{(b)}} \;-\; \underbrace{(\hat\nu - \nu)}_{\text{(c)}}.
\]
Since $\hat\tp^{\star}-\tp^{\star}$ is supported on $\Sact$, $\|\hat\tp^{\star}-\tp^{\star}\|_1 \le s\|\hat\tp^{\star}-\tp^{\star}\|_{\infty} \le 2s^{2}\kappa(\alpha)\eps$; with $\|\hat\tA_{\cdot,j}\|_{\infty}\le 1$ this gives $|\text{(a)}|\le 2s^{2}\kappa(\alpha)\eps$.
Using $\|\tp^{\star}\|_1 = 1$ and $\|\hat\tA_{\cdot,j}-\tA_{\cdot,j}\|_{\infty}\le\eps$: $|\text{(b)}|\le \eps$.
From (i): $|\text{(c)}|\le 2s\kappa(\alpha)\eps$.
Therefore, using $\kappa(\alpha)\ge 1$ to bound $\eps\le\kappa(\alpha)\eps$,
\[
|\hat\gamma_j - \gamma_j| \;\le\; 2s^{2}\kappa(\alpha)\eps + \eps + 2s\kappa(\alpha)\eps \;\le\; (2s^{2}+2s+1)\kappa(\alpha)\eps \;=\; C\kappa(\alpha)\eps,
\]
where $\kappa(\alpha)\ge 1$ holds for any basis containing the simplex-equality row $\ind^{\top}$.
A symmetric bound holds for the row slacks $\hat\rho_i$, $i\in\Sflr$.
With $\eps\le\tDelta(\alpha)/(C\kappa(\alpha))$ we get $|\hat\gamma_j-\gamma_j|\le\tDelta(\alpha)$, so for every non-basic $j$ with $\gamma_j>0$ (hence $\gamma_j\ge\tDelta(\alpha)$) the perturbed slack satisfies $\hat\gamma_j\ge\gamma_j-\tDelta(\alpha)\ge 0$, and likewise $\hat\rho_i\ge 0$ for $i\in\Sflr$.
All non-basic reduced costs thus remain non-negative on $\hat\tA$, so $\mathcal B$ stays optimal; combined with the strict primal feasibility from (i) and the non-degeneracy hypothesis, $\mathcal B$ remains the unique optimal basis and the active set is preserved.

Combining (i) and (ii) with $C := 2s^{2}+2s+1$ completes the proof; bounding $s\le m$ gives $C\le 2m^{2}+2m+1$ uniformly, and at $s=1$ (Theorem~\ref{thm:fetc_inst}) $C=5$.
Since the basis determines $(\Sact,\Sflr,\supp(q^{\star}_{\F}))$, the conclusion follows.
\qed

\section{Scope and validity range of Theorem~\ref{thm:fetc_inst}}
\label{app:scope}

Two $\alpha$-dependent effects bound the range over which Theorem~\ref{thm:fetc_inst} is useful.

\emph{Structure.} Assumption~\ref{ass:dominant} holds not for all $\alpha\in[0,1)$ but on a sub-interval $[0,\tilde\alpha_{\mathrm{struct}})$ starting at $0$, requiring the unconstrained pure NE to have gap large enough to survive the rank-one shift $A\to\tA$. By continuity this interval is non-trivial whenever the unconstrained game has a strict pure NE, so the regime is not confined to $\alpha=0$. When the column means $c_j$ are distinct, the limiting active column is $j^*=\argmin_j c_j$ and the structure survives all of $[0,1)$ iff the dominant action retains a strictly positive advantage $\min_{i\neq i^\star}[A(i^\star,j^*)-A(i,j^*)]>0$ in that column. The boundary case of constant $c_j$ is even simpler: the shift becomes a global additive constant plus a positive rescaling, leaving the equilibrium $\alpha$-invariant, so the structure survives all of $[0,1)$ automatically (as in Example~\ref{ex:nondeg}, where $c=(0.5,0.5)^\top$ and $q^\star_\F=e_2$ for every $\alpha$).

\emph{Margin.} Even within $[0,\tilde\alpha_{\mathrm{struct}})$ the bound degrades \emph{continuously} as $\alpha\to1$, since $\tA\to\ind c^\top$ (identical rows) in the limit. By the dual-gap identity of Section~\ref{sec:gaps}, with $\pi(\alpha)=1$ under Assumption~\ref{ass:dominant} the primal margin never binds, so
\[
\tDelta(\alpha)\;\le\;\min_{i\neq i^\star}\rho_i(\alpha)\;=\;(1-\alpha)\,\delta_{\mathrm{row}}(\alpha),
\qquad
\delta_{\mathrm{row}}(\alpha):=\min_{i\neq i^\star}\big[(Aq^\star_\F)_{i^\star}-(Aq^\star_\F)_i\big].
\]
Hence $\tDelta(\alpha)\to0$ at least linearly in $(1-\alpha)$, and the leading term of Theorem~\ref{thm:fetc_inst} grows at least as $(1-\alpha)^{-2}$. The blowup is carried by $\tDelta(\alpha)$, not $\kappa(\alpha)$: in the $s=1$ regime $M_{\mathcal B}$ is $2\times2$ with entries in $[0,1]$, so $\kappa(\alpha)\le2$ uniformly (cf.\ Corollary~\ref{cor:pure_recovery}); the $\kappa\to\infty$ mechanism arises only for $s\ge2$, outside this theorem.

\emph{Consequence.} Since the bound is non-vacuous (below $T$) only when $\tDelta(\alpha)\gtrsim\sigma\sqrt{m\ell\log(m\ell T)/T}$, the upper estimate on $\tDelta$ gives a \emph{necessary} condition: the usable range is no larger than $[0,\tilde\alpha(T))$ with $\tilde\alpha(T):=1-\Theta\big(\tfrac{\sigma}{\delta_{\mathrm{row}}(1^-)}\sqrt{m\ell\log(m\ell T)/T}\big)$. This widens with the horizon ($\tilde\alpha(T)\uparrow1$ as $T\to\infty$) and with the game's gap (larger $\delta_{\mathrm{row}}(1^-)$ tolerates more fairness), and collapses as $\delta_{\mathrm{row}}(1^-)\to0$. For a fixed horizon the usable range is bounded away from $1$, capped by $\min(\tilde\alpha_{\mathrm{struct}},\tilde\alpha(T))$. The endpoint $\alpha=1$ is excluded regardless, where the floor forces the uniform strategy and there is nothing to learn.

\section{Compatibility with the original constrained LP formulation}
\label{app:lp_compat}

The fair LP can be stated directly on $A$ without invoking $\tA$:
\begin{equation}
\label{eq:lp_app}
\max_{p\in\R^m,\,v\in\R}\; v \quad\text{s.t.}\quad \sum_i p_i A(i,j)\ge v\;\forall\,j,\quad p_i\ge\alpha/m\;\forall\,i,\quad \sum_i p_i = 1.
\end{equation}
By Proposition~\ref{prop:reduction}, \eqref{eq:lp_app} is equivalent (via the affine bijection $p\leftrightarrow\tp$) to the standard NE LP on $\tA$.
Solving \eqref{eq:lp_app} on $\hat A$ returns the same row strategy as solving the standard NE LP on $\hat\tA$.
The two formulations are computationally interchangeable; we phrase Algorithm~\ref{alg:fetc} and the analysis via $\tA$ because basis-perturbation theory (Lemma~\ref{lem:basis}) is cleaner without floor constraints, which would introduce additional Lagrange multipliers and a separate primal-feasibility analysis.

\end{document}